\newcommand{\eqdef}{\stackrel{\mathrm{def}}{=}}
\newtheorem{theorem}{Theorem}
\newtheorem{proposition}[theorem]{Proposition}
\newtheorem{corollary}[theorem]{Corollary}
\newtheorem*{definition*}{Definition}
\newcommand{\R}{\mathbb{R}} 
\newcommand{\N}{\mathbb{N}}
\newcommand{\e}{\varepsilon}
\newcommand{\mb}{\mathbb}
\newcommand{\ms}{\mathscr}
\title{Learning low-degree functions from \\ a logarithmic number of random queries}
\author{Alexandros Eskenazis}
\address{(A.~E.) Trinity College and Department of Pure Mathematics and Mathematical Statistics\\
University of Cambridge, UK.}
\email{ae466@cam.ac.uk}
\author{Paata Ivanisvili}
\address{(P.~I.)
Department of Mathematics,
University of California, Irvine\\
Irvine, CA 92617, USA.}
\email{pivanisv@uci.edu} 
\thanks{A.~E. was supported by a Junior Research Fellowship from Trinity College, Cambridge. P.~I. was partially supported by the NSF grants DMS-2152346 and CAREER-DMS-2152401.}
\begin{document}

\maketitle
\vspace{-3mm}

\begin{abstract}
We prove that every bounded function $f:\{-1,1\}^n\to[-1,1]$ of degree at most $d$ can be learned with $L_2$-accuracy $\e$ and confidence $1-\delta$ from $\log(\tfrac{n}{\delta})\,\e^{-d-1} C^{d^{3/2}\sqrt{\log d}}$ random queries, where $C>1$ is a universal finite constant.
\end{abstract}

\bigskip

{\footnotesize
\noindent {\em 2020 Mathematics Subject Classification.} Primary: 06E30; Secondary: 42C10, 68Q32.

\noindent {\em Key words.} Discrete hypercube, learning theory, Bohnenblust--Hille inequality.}


\section{Introduction}

Every function $f:\{-1,1\}^n\to\R$ admits a unique Fourier--Walsh expansion of the form
\begin{equation} \label{eq:walsh}
\forall \ x\in\{-1,1\}^n, \qquad f(x)= \sum_{S\subseteq\{1,\ldots,n\}} \hat{f}(S) w_S(x),
\end{equation}
where $w_S(x) = \prod_{i\in S} x_i$ and the Fourier coefficients $\hat{f}(S)$ are given by
\begin{equation}
\forall \ S\subseteq\{1,\ldots,n\}, \qquad \hat{f}(S) = \frac{1}{2^n} \sum_{y\in\{-1,1\}^n} f(y) w_S(y).
\end{equation}
We say that $f$ has degree at most $d\in\{1,\ldots,n\}$ if $\hat{f}(S)=0$ for every subset $S$ with $|S|>d$.


\subsection{Learning functions on the hypercube}

Let $\ms{C}$ be a class of functions $f:\{-1,1\}^n\to\R$ on the $n$-dimensional discrete hypercube. The problem of learning the class $\ms{C}$ can be described as follows: given a source of \emph{examples} $(x,f(x))$, where $x\in\{-1,1\}^n$, for an unknown function \mbox{$f\in\ms{C}$}, compute a \emph{hypothesis} function $h:\{-1,1\}^n\to\R$ which is a good approximation of $f$ up to a given error in some prescribed metric. In this paper we will be interested in the \emph{random query model} with $L_2$-error, in which we are given $N$ independent examples $(x,f(x))$, each chosen uniformly at random from the discrete hypercube $\{-1,1\}^n$, and we want to efficiently construct a (random) function $h:\{-1,1\}^n\to\R$ such that $\|h-f\|_{L_2}^2 < \e$ with probability at least $1-\delta$, where $\e,\delta\in(0,1)$ are given accuracy and confidence parameters. The goal is to construct a randomized algorithm\mbox{ which produces the hypothesis function $h$ from a minimal number $N$ of examples.}

The above very general problem has been studied for decades in computational learning theory and many results are known\footnote{We will by no means attempt to survey this (vast) field, so we refer the interested reader to the relevant chapters of O'Donnell's book \cite{O'Do14} and the references therein.}, primarily for various classes $\ms{C}$ of structured Boolean functions $f:\{-1,1\}^n\to\{-1,1\}$. Already since the late 1980s, researchers used the Fourier--Walsh expansion \eqref{eq:walsh} to design such learning algorithms (see the survey \cite{Man94}). Perhaps the most classical of these is the \emph{Low-Degree Algorithm} of Linial, Mansour and Nisan \cite{LMN93} who showed that for the class $\ms{C}_b^d$ of all \emph{bounded} functions $f:\{-1,1\}^n\to[-1,1]$ of degree at most $d$ there exists an algorithm which produces an $\e$-approximation of $f$ with probability at least $1-\delta$ using \mbox{$N=\tfrac{2n^d}{\e}\log(\tfrac{2n^d}{\delta})$} samples. In this generality, the $O_{\e,\delta,d}(n^d\log n)$ estimate of \cite{LMN93} was the state of the art until the recent work \cite{IRRRY21} of Iyer, Rao, Reis, Rothvoss and Yehudayoff who employed analytic techniques to derive new bounds on the $\ell_1$-size of the Fourier spectrum of bounded functions (see also Section \ref{sec:3}) and used these estimates to show that $N=O_{\e,\delta,d}(n^{d-1}\log n)$ examples suffice to learn $\ms{C}_{b}^d$. The goal of the present paper is to further improve this result and show that in fact $N=O_{\e,\delta,d}(\log n)$ samples suffice for this purpose.

\begin{theorem} \label{thm:main}
Fix $\e,\delta\in(0,1)$, $n \in\N$, $d\in\{1,\ldots,n\}$ and a bounded function $f:\{-1,1\}^n\to[-1,1]$ of degree at most $d$. If $N\in\N$ satisfies
\begin{equation}
N\geq \min\left\{\frac{\exp(Cd^{3/2}\sqrt{\log d})}{\e^{d+1}}, \frac{4dn^{d}}{\e}\right\} \log\left(\frac{n}{\delta}\right),
\end{equation}
where $C\in(0,\infty)$ is a large numerical constant, then $N$ uniformly random independent queries of pairs $(x,f(x))$, where $x\in\{-1,1\}^n$, suffice for the construction of a random function $h:\{-1,1\}^n \to \R$ satisfying the condition $\|h-f\|_{L_2}^2 <\e$ with probability at least $1-\delta$.
\end{theorem}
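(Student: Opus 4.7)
\medskip

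\noindent \textbf{Proof plan.} The plan is to analyze the canonical empirical-Fourier-with-thresholding algorithm, invoking two different tools in parallel in order to obtain the two bounds inside the minimum. Given independent uniform samples $x^{(1)},\ldots,x^{(N)}\in\{-1,1\}^n$ and the queries $f(x^{(i)})$, first compute the empirical Fourier coefficients
\[
\tilde{f}(S)\eqdef \frac{1}{N}\sum_{i=1}^{N} f(x^{(i)})\,w_S(x^{(i)}) \qquad \text{for all } |S|\leq d,
\]
and return $h(x)\eqdef\sum_{|S|\leq d,\,|\tilde{f}(S)|>\tau}\tilde{f}(S)\,w_S(x)$ for a threshold $\tau\geq 0$ to be chosen. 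Since $f(x)w_S(x)\in[-1,1]$, Hoeffding's inequality combined with a union bound over the at most $n^{d}$ subsets of $\{1,\ldots,n\}$ of size at most $d$ yields that, with probability at least $1-\delta$,
\[
\max_{|S|\leq d}\bigl|\tilde{f}(S)-\hat{f}(S)\bigr|\leq \alpha, \qquad \alpha\eqdef \sqrt{\frac{2\log(n^{d}/\delta)}{N}}.
\]

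Conditioning on this event and using that $\deg f\leq d$, Parseval's identity reduces the analysis to bounding $\|h-f\|_{L_2}^{2} = \sum_{|S|\leq d}(\tilde{f}(S)\mathbf{1}_{|\tilde{f}(S)|>\tau}-\hat{f}(S))^{2}$. For the crude bound $N\geq 4dn^{d}\e^{-1}\log(n/\delta)$, take $\tau=0$, so that the error is at most $n^{d}\alpha^{2}<\e$ directly. For the sharp bound, take $\tau=2\alpha$: a case split on whether $|\hat{f}(S)|\leq 3\alpha$ or not shows that each summand is controlled by a constant multiple of $\min\{\hat{f}(S)^{2},\alpha^{2}\}$, i.e.\ coefficients with $|\hat{f}(S)|\leq 3\alpha$ may fail to be retained but then contribute only their own squared value, whereas larger coefficients are retained with contribution at most $\alpha^{2}$.

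The sharp analysis rests on the Bohnenblust--Hille inequality for the Boolean hypercube: every polynomial $g:\{-1,1\}^{n}\to\R$ of degree at most $d$ satisfies
\[
\Bigl(\sum_{S}|\hat{g}(S)|^{p}\Bigr)^{1/p}\leq \mathrm{BH}_{d}\,\|g\|_{\infty}, \qquad p\eqdef \tfrac{2d}{d+1},
\]
with the best known constant obeying $\mathrm{BH}_{d}\leq \exp(C_{0}\sqrt{d\log d})$. Applied to $f$, this simultaneously yields the Markov-type count $|\{S:|\hat{f}(S)|\geq \alpha\}|\leq \mathrm{BH}_{d}^{p}\,\alpha^{-p}$ and, via $\ell_{p}$-to-$\ell_{2}$ interpolation at level $3\alpha$, the tail estimate $\sum_{|\hat{f}(S)|\leq 3\alpha}\hat{f}(S)^{2}\leq (3\alpha)^{2-p}\mathrm{BH}_{d}^{p}$. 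Combining the two and using $2-p=\tfrac{2}{d+1}$ gives $\|h-f\|_{L_2}^{2}\leq C'\,\alpha^{2/(d+1)}\mathrm{BH}_{d}^{p}$. Setting the right-hand side equal to $\e$ forces $\alpha^{-2}\leq C''\mathrm{BH}_{d}^{2d}\e^{-(d+1)}$ (since $p(d+1)=2d$), which translates via the definition of $\alpha$ into
\[
N\geq C''\log(n^{d}/\delta)\cdot \frac{\mathrm{BH}_{d}^{2d}}{\e^{d+1}} \leq \log(n/\delta)\cdot \frac{\exp(Cd^{3/2}\sqrt{\log d})}{\e^{d+1}},
\]
where $C$ absorbs $C_{0}$, $C''$ and the extra factors of $d$ into the exponential.

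The only genuinely nontrivial ingredient is the Boolean Bohnenblust--Hille inequality with the sub-exponential constant $\exp(C_{0}\sqrt{d\log d})$: all remaining steps (Hoeffding, union bound, thresholding and $\ell_{p}$-to-$\ell_{2}$ tail interpolation) are entirely routine. I therefore expect the body of the paper to be devoted to proving, or importing from the harmonic-analytic literature, this sharp form of the Bohnenblust--Hille inequality tailored to the Boolean cube; that is where the actual difficulty lies.
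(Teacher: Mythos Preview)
Your proposal is correct and matches the paper's approach essentially line by line: the paper runs the same empirical-Fourier-with-thresholding algorithm, bounds the retained and dropped contributions via the Bohnenblust--Hille inequality \eqref{eq:dmp} exactly as you describe, and imports the constant $B_d^{\{\pm1\}}\leq\exp(\kappa\sqrt{d\log d})$ from \cite{DMP19} rather than proving it. The only cosmetic differences are that the paper chooses the threshold $a=b(1+\sqrt{d+1})$ (a $\sqrt{d}$-multiple of the concentration radius) instead of your $\tau=2\alpha$, and defers the $4dn^{d}/\e$ branch to the Low-Degree Algorithm of \cite{LMN93} rather than rederiving it with $\tau=0$; neither affects the final bound.
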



The proof of Theorem \ref{thm:main} relies on some important approximation theoretic estimates going back to the 1930s which we shall now describe (see also \cite{DS14}). To the best of our knowledge, these tools had\mbox{ not yet been exploited in the computational learning theory literature.}


\subsection{The Fourier growth of Walsh polynomials in $\ell_{\frac{2d}{d+1}}$}
Estimates for the growth of coefficients of polynomials as a function of their degree and their maximum on compact sets go back to the early days of approximation theory (see \cite{BE95}). A seminal result of this nature is Littlewood's celebrated $\tfrac{4}{3}$-inequality \cite{Lit30} for bilinear forms which was later generalized by Bohnenblust and Hille \cite{BH31} for multilinear forms on the torus $\mb{T}^n$ or the unit square $[-1,1]^n$. By means of polarization, one can use this multilinear estimate to derive an inequality for polynomials which reads as follows\footnote{For $\alpha=(\alpha_1,\ldots,\alpha_n)\in(\N\cup\{0\})^n$, we use the standard notations $|\alpha|=\alpha_1+\cdots+\alpha_n$ and $x^\alpha = x_1^{\alpha_1}\cdots x_n^{\alpha_n}$.}. For every $\mb{K}\in\{\mb{R},\mb{C}\}$ and $d\in\N$, there exists $B_d^\mb{K}\in(0,\infty)$ such that for every $n\in\N$ and every coefficients $c_\alpha\in\mb{K}$, where $\alpha\in(\N\cup\{0\})^n$ with $|\alpha|\leq d$, we have
\begin{equation} \label{eq:bh}
\left( \sum_{|\alpha|\leq d} |c_\alpha|^{\frac{2d}{d+1}}\right)^{\frac{d+1}{2d}} \leq B_d^\mb{K} \max\bigg\{\  \bigg| \sum_{|\alpha|\leq d} c_\alpha x^\alpha\bigg|: \ x\in\mb{K}^n \ \mbox{ with } \ \|x\|_{\ell_\infty^n(\mb{K})}\leq1 \bigg\}.
\end{equation}
Moreover, $\tfrac{2d}{d+1}$ is the smallest exponent for which the optimal constant in \eqref{eq:bh} is independent of the number of variables $n$ of the polynomial. The exact asymptotics of the constants $B_d^\R$ and $B_d^\mb{C}$ remain unknown, however it is known that there is a significant gap between $B_{d}^{\mathbb{R}}$ and $B_{d}^{\mathbb{C}}$, namely that
$\limsup_{d \to \infty} (B^{\mathbb{R}}_{d})^{1/d}=1+\sqrt{2}$ whereas $B_{d}^{\mathbb{C}} \leq C^{\sqrt{d \ln d}}$ for a finite constant $C>1$ (see \cite{DFOOS11,BPS14,DS14,CJMPS15,DMP19} for these and other important advances of the last decade). Restricting inequality \eqref{eq:bh} to real \emph{multilinear} polynomials, convexity shows that the maximum on the right-hand side is attained at a point $x\in\{-1,1\}^n$, which, in view of \eqref{eq:walsh}, makes \eqref{eq:bh} an estimate for the Fourier--Walsh growth of functions on the discrete hypercube. We shall denote by $B_d^{\{\pm 1\}}$ the corresponding optimal constant (first explicitly investigated by Blei in \cite[p.~175]{Ble01}), that is, the least constant such that for \mbox{every $n\in\N$ and every function $f:\{-1,1\}^n\to\R$ of degree at most $d$,}
\begin{equation} \label{eq:dmp}
\left( \sum_{S\subseteq\{1,\ldots,n\}} |\hat{f}(S)|^{\frac{2d}{d+1}} \right)^{\frac{d+1}{2d}} \leq B_d^{\{\pm 1\}} \ \|f\|_{L_\infty}.
\end{equation}
The best known quantitative result in this setting is due to Defant, Masty\l o and P\'erez \cite{DMP19} who showed that $B_d^{\{\pm 1\}} \leq \exp(\kappa \sqrt{d\log d})$ for a universal constant $\kappa\in(0,\infty)$. The main contribution of this work is the following theorem relating the growth of the constant $B_d^{\{\pm 1\}}$ and learning.

\begin{theorem} \label{thm:main2}
Fix $\e,\delta\in(0,1)$, $n\in\N$, $d\in\{1,\ldots,n\}$ and a bounded function $f:\{-1,1\}^n\to[-1,1]$ of degree at most $d$. If $N\in\N$ satisfies
\begin{equation}
N\geq \frac{e^8d^{2}}{\e^{d+1}} (B_{d}^{\{\pm 1\}})^{2d} \log\left(\frac{n}{\delta}\right),
\end{equation}
then given $N$ uniformly random independent queries of pairs $(x,f(x))$, where $x\in\{-1,1\}^n$, one can construct a random function $h:\{-1,1\}^n \to \R$ satisfying $\|h-f\|_{L_2}^2 <\e$ with probability at least $1-\delta$.
\end{theorem}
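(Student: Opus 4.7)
My plan is to adapt the classical Low-Degree Algorithm with Fourier thresholding, using \eqref{eq:dmp} to control both the number of empirically large coefficients and the $L_2$-mass of the empirically small ones. From the $N$ random samples $(x_i,f(x_i))$ I would form the empirical Fourier coefficients $\tilde f(S) = \tfrac{1}{N}\sum_{i=1}^{N} f(x_i)\,w_S(x_i)$ for every $S\subseteq\{1,\ldots,n\}$ with $|S|\le d$, fix a threshold $\tau>0$ to be optimized later, and output the hypothesis $h(x) = \sum_{|S|\le d,\,|\tilde f(S)|\ge\tau} \tilde f(S)\,w_S(x)$. Since each $f(x_i)\,w_S(x_i)\in[-1,1]$ has mean $\hat f(S)$, Hoeffding's inequality together with a union bound over the at most $2n^d$ relevant subsets gives, with probability at least $1-\delta$, a uniform deviation estimate $\max_{|S|\le d}|\tilde f(S)-\hat f(S)|\le\tau_0$ with $\tau_0$ of order $\sqrt{(d\log n+\log(1/\delta))/N}$.

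Conditional on this favourable event I would decompose, by Parseval, $\|h-f\|_{L_2}^2 = \sum_{|\tilde f(S)|\ge\tau}(\tilde f(S)-\hat f(S))^2 + \sum_{|\tilde f(S)|<\tau}\hat f(S)^2$ (both sums over $|S|\le d$). Each kept summand is at most $\tau_0^2$, while the number of retained $S$ is bounded by $\#\{S:|\hat f(S)|\ge\tau-\tau_0\}$, and Markov's inequality applied to $|\hat f(S)|^{2d/(d+1)}$ combined with \eqref{eq:dmp} (together with $\|f\|_{L_\infty}\le1$) controls this count by $(B_d^{\{\pm1\}})^{2d/(d+1)}/(\tau-\tau_0)^{2d/(d+1)}$. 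For the discarded part, $|\tilde f(S)|<\tau$ forces $|\hat f(S)|<\tau+\tau_0$, and pulling $|\hat f(S)|^{2/(d+1)}\le(\tau+\tau_0)^{2/(d+1)}$ out of the sum $\sum_S|\hat f(S)|^{2d/(d+1)}\le(B_d^{\{\pm1\}})^{2d/(d+1)}$ yields the bound $(\tau+\tau_0)^{2/(d+1)}(B_d^{\{\pm1\}})^{2d/(d+1)}$.

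Combining these two estimates, one obtains $\|h-f\|_{L_2}^2 \le \bigl[\tau_0^2/(\tau-\tau_0)^{2d/(d+1)}+(\tau+\tau_0)^{2/(d+1)}\bigr](B_d^{\{\pm1\}})^{2d/(d+1)}$, and what remains is to balance $\tau$ against the Hoeffding noise $\tau_0$. I expect this calibration to be the only delicate point of the argument: a crude choice like $\tau=2\tau_0$ already proves the qualitative statement but inflates the leading prefactor to something of the form $C^{d+1}$, whereas the sharper scaling $\tau\asymp\sqrt{d}\,\tau_0$ makes the bracketed quantity behave like $(1+O((\log d)/d))\,\tau_0^{2/(d+1)}$; raising to the $(d+1)/2$-th power to solve for $\tau_0$ then turns $(1+O((\log d)/d))^{(d+1)/2}$ into a polynomial factor in $d$ rather than an exponential one, recovering the quadratic $d^2$ announced in the theorem. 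Substituting the resulting inequality $\tau_0^2\le c\,\e^{d+1}/\bigl(d\,(B_d^{\{\pm1\}})^{2d}\bigr)$ into the Hoeffding relation $\tau_0^2\asymp(d\log n+\log(1/\delta))/N$ then yields the advertised sample size.
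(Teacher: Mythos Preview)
Your proposal is correct and follows essentially the same route as the paper: the empirical coefficients, the thresholded hypothesis, the two-term Parseval decomposition with the Bohnenblust--Hille bound \eqref{eq:dmp} applied to each piece, and---crucially---the calibration $\tau\asymp\sqrt{d}\,\tau_0$ (the paper takes $a=b(1+\sqrt{d+1})$) all match. Your anticipation that this scaling, rather than $\tau=2\tau_0$, is what keeps the $d$-dependence polynomial is exactly the point of the paper's technical inequality \eqref{uto1}; the remaining factor of $d$ in the stated $d^2$ comes from bounding $\log\bigl(\tfrac{2}{\delta}\sum_{k\le d}\binom{n}{k}\bigr)$ by $d\log(n/\delta)$ up to constants.
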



In Section \ref{sec:2} we will prove Theorem \ref{thm:main2} and use it to derive Theorem \ref{thm:main}. In Section \ref{sec:3} we will present some additional remarks on Boolean analysis and learning, in particular showing that the dependence on $n$ in Theorem \ref{thm:main} is optimal for $\delta\asymp\tfrac{1}{n}$. Moreover, we shall improve the recent bounds of \cite{IRRRY21} on the $\ell_1$-Fourier growth of bounded functions of low degree.

\subsection*{Acknowledgements}
We are very grateful to Assaf Naor for constructive feedback and to Lauritz Streck for useful discussions which led to Proposition \ref{prop:lb}.


\section{Proofs} \label{sec:2}

\begin{proof} [Proof of Theorem \ref{thm:main2}]
Fix a parameter $b\in(0,\infty)$ and denote by
\begin{equation} \label{eq:defN}
N_b \eqdef \left\lceil \frac{2}{b^2} \log\left(\frac{2}{\delta}\sum_{k=0}^d \binom{n}{k}\right) \right\rceil.
\end{equation}
Let $X_1,\ldots,X_{N_b}$ be independent random vectors, each uniformly distributed on $\{-1,1\}^n$. For a subset $S\subseteq\{1,\ldots,n\}$ with $|S|\leq d$ consider the empirical Walsh coefficient of $f$, given by
\begin{equation}
\alpha_S = \frac{1}{N_b} \sum_{j=1}^{N_b} f(X_j) w_S(X_j).
\end{equation}
As $\alpha_S$ is a sum of bounded i.i.d.~random variables and $\mb{E}[\alpha_S]=\hat{f}(S)$, the Chernoff bound gives
\begin{equation}
\forall \ S\subseteq\{1,\ldots,n\}, \qquad \mb{P}\big\{ |\alpha_S-\hat{f}(S)| > b\big\} \leq 2\exp(-N_bb^2/2).
\end{equation}
Therefore, using the union bound and taking into account that $f$ has degree at most $d$, we get
\begin{equation} \label{eq:defG}
\mb{P}\underbrace{\big\{ |\alpha_S-\hat{f}(S)| \leq b, \ \mbox{for every } S\subseteq\{1,\ldots,n\} \ \mbox{with} \ |S|\leq d\big\}}_{G_b} \geq 1-2\sum_{k=0}^d \binom{n}{k} \exp(-N_bb^2/2) \stackrel{\eqref{eq:defN}}{\geq} 1-\delta.
\end{equation}
Fix an additional parameter $a\in(b,\infty)$ and consider the random collection of sets given by
\begin{equation} \label{eq:defS}
\ms{S}_a \eqdef \big\{ S\subseteq\{1,\ldots,n\}: \ |\alpha_S| \geq a \big\}.
\end{equation}
Observe that if the event $G_b$ of equation \eqref{eq:defG} holds, then
\begin{equation} \label{eq:notinS}
\forall \ S\notin\ms{S}_a, \qquad |\hat{f}(S)| \leq |\alpha_S-\hat{f}(S)| + |\alpha_S| < a+b
\end{equation}
and
\begin{equation} \label{eq:inS}
\forall \ S\in\ms{S}_a, \qquad |\hat{f}(S)| \geq |\alpha_S| - |\alpha_S-\hat{f}(S)| \geq a-b.
\end{equation}
Finally, consider the random function $h_{a,b}:\{-1,1\}^n\to\R$ given by
\begin{equation}
\forall \ x\in\{-1,1\}^n, \qquad h_{a,b}(x) \eqdef \sum_{S\in\ms{S}_a} \alpha_S w_S(x).
\end{equation}
Combining \eqref{eq:inS} with inequality \eqref{eq:dmp}, we deduce that
\begin{equation} \label{eq:sizeS}
|\ms{S}_a| \stackrel{\eqref{eq:inS}}{\leq} (a-b)^{-\frac{2d}{d+1}} \sum_{S\in\ms{S}_a} |\hat{f}(S)|^{\frac{2d}{d+1}} \leq (a-b)^{-\frac{2d}{d+1}} \sum_{S\subseteq\{1,\ldots,n\}} |\hat{f}(S)|^{\frac{2d}{d+1}} \stackrel{\eqref{eq:dmp}}{\leq} (a-b)^{-\frac{2d}{d+1}} (B_d^{\{\pm 1\}})^\frac{2d}{d+1}.
\end{equation} 
Therefore, on the event $G_b$ we have
\begin{equation} \label{eq:first}
\begin{split}
\|h_{a,b}&-f\|_{L_2}^2  = \sum_{S\subseteq\{1,\ldots,n\}} \big|\hat{h}_{a,b}(S)-\hat{f}(S)\big|^2 = \sum_{S\in\ms{S}_a} |\alpha_S-\hat{f}(S)|^2 + \sum_{S\notin\ms{S}_a} |\hat{f}(S)|^2
\\ & \stackrel{\eqref{eq:notinS}}{<} |\ms{S}_a| b^2 + (a+b)^{\frac{2}{d+1}} \sum_{S\notin\ms{S}_a} |\hat{f}(S)|^{\frac{2d}{d+1}} \stackrel{\eqref{eq:dmp}\wedge\eqref{eq:sizeS}}{\leq}  (B_d^{\{ \pm 1\}})^\frac{2d}{d+1} \big( (a-b)^{-\frac{2d}{d+1}}b^2 + (a+b)^{\frac{2}{d+1}} \big).
\end{split}
\end{equation}
Choosing $a=b(1+\sqrt{d+1})$, we deduce that
\begin{equation}\label{uto2}
\|h_{b(1+\sqrt{d+1}),b}-f\|_{L_2}^2 < (B_d^{\{\pm 1\}})^\frac{2d}{d+1} b^{\frac{2}{d+1}} ((d+1)^{-\frac{d}{d+1}}+(2+\sqrt{d+1})^{\frac{2}{d+1}}).
\end{equation}
Next, we need the technical inequality 
\begin{align}\label{uto1}
(d+1)^{-\frac{d}{d+1}}+(2+\sqrt{d+1})^{\frac{2}{d+1}} \leq (e^{4} (d+1))^{\frac{1}{d+1}} \quad \text{for all} \quad d\geq 1. 
\end{align}
Rearranging the terms, it suffices to show that $(2+\sqrt{d+1})^{\frac{2}{d+1}} \leq (d+1)^{\frac{1}{d+1}}\big(e^{\frac{4}{d+1}}-\frac{1}{d+1}\big)$, which is equivalent to $\big(\frac{2}{\sqrt{d+1}} +1\big)^{\frac{2}{d+1}} \leq e^{\frac{4}{d+1}}-\frac{1}{d+1}$. We have 
\begin{equation}
\left(\frac{2}{\sqrt{d+1}} +1\right)^{\frac{2}{d+1}} \leq \left(\sqrt{2}+1 \right)^{\frac{2}{d+1}} \stackrel{(*)}{\leq} 1+\frac{3}{d+1}\leq e^{\frac{4}{d+1}}-\frac{1}{d+1},
\end{equation}
where inequality $(*)$ holds because the left hand side is convex in the variable $\lambda \eqdef\frac{2}{d+1}$ whereas the right hand side is linear and since $(*)$ holds at the endpoints $\lambda=0, 1$.

Combining (\ref{uto2}) and (\ref{uto1}) we see that $\|h_{b(1+\sqrt{d+1}),b}-f\|_{L_2}^2 <\varepsilon$ holds for $b^{2}\leq  e^{-5} d^{-1} \varepsilon^{d+1} (B_d^{\{\pm 1\}})^{-2d}$. Plugging this choice of $b$ in \eqref{eq:defN} shows that given $N$ random queries, where
\begin{equation}
N=\left\lceil \frac{e^6 d (B_d^{\{\pm 1\}})^{2d}}{\e^{d+1}} \log\left(\frac{2}{\delta}\sum_{k=0}^d \binom{n}{k}\right)\right\rceil,
\end{equation}
the random function $h_{b(1+\sqrt{d+1}),b}$ satisfies $\|h_{b(1+\sqrt{d+1}),b}-f\|_{L_2}^2<\e$ with probability at least $1-\delta$ and the conclusion of the theorem follows from elementary estimates, such as 
\begin{equation*}
\sum_{k=0}^{d}\binom{n}{k}\leq  \sum_{k=0}^{d} \frac{n^{k}}{k!} =\sum_{k=0}^{d} \frac{d^{k}}{k!} \left(\frac{n}{d}\right)^{k} \leq \left(\frac{en}{d}\right)^{d}. \qedhere
\end{equation*}
\end{proof}

Theorem \ref{thm:main} is a straightforward consequence of Theorem \ref{thm:main2}. 

\begin{proof} [Proof of Theorem \ref{thm:main}]
Theorem \ref{thm:main2} combined with the bound $B_d^{\{\pm 1\}}\leq \exp(\kappa\sqrt{d\log d})$ of \cite{DMP19} imply the conclusion of Theorem \ref{thm:main} for $\e\geq\tfrac{\exp(C\sqrt{d\log d})}{n}$, where $C\in(0,\infty)$ is a large universal constant. The case $\e<\tfrac{\exp(C\sqrt{d\log d})}{n}$ follows from the Low-Degree Algorithm of \cite{LMN93}.
\end{proof}

\section{Concluding remarks} \label{sec:3}

We conclude with a few additional remarks on the spectrum of bounded functions defined on the hypercube and corresponding learning algorithms. For a function $f:\{-1,1\}^n\to\R$, its Rademacher projection on level $\ell\in\{1,\ldots,n\}$ is defined as
\begin{equation}
\forall \ x\in\{-1,1\}^n, \qquad \mathrm{Rad}_\ell f(x) = \sum_{\substack{S\subseteq\{1,\ldots,n\} \\ |S|=\ell}} \hat{f}(S) w_S(x).
\end{equation}

\smallskip

\noindent {\bf 1.} The first main theorem of \cite{IRRRY21} asserts that if $f:\{-1,1\}^n\to\R$ is a function of degree $d$, then 
\begin{equation} \label{eq:Radinf}
\forall \ \ell\in\{1,\ldots,d\}, \qquad \big\| \mathrm{Rad}_\ell f\big\|_{L_\infty} \leq \begin{cases} \frac{|T_d^{(\ell)}(0)|}{\ell!}\cdot \|f\|_{L_\infty}, & \mbox{if } (d-\ell) \mbox{ is even} \\ \frac{|T_{d-1}^{(\ell)}(0)|}{\ell!}\cdot \|f\|_{L_\infty}, & \mbox{if } (d-\ell) \mbox{ is odd} \end{cases},
\end{equation}
where $T_d(t)$ is the $d$-th Chebyshev polynomial of the first kind, that is, the unique real polynomial of degree $d$ such that $\cos(d\theta) = T_d(\cos\theta)$ for every $\theta\in\R$. Moreover, Iyer, Rao, Reis, Rothvoss and Yehudayoff observed in \cite[Proposition~2]{IRRRY21} that this estimate is asymptotically sharp. We present a simple proof of their inequality \eqref{eq:Radinf} (see also \cite{EI20} for related arguments).

\begin{proof} [Proof of \eqref{eq:Radinf}]
For any $f :\{-1,1\}^{n} \to \mathbb{R}$ consider its harmonic extension on $[-1,1]^{n}$,  
\begin{equation}
\forall \ (x_1,\ldots,x_n)\in[-1,1]^n, \qquad \tilde{f}(x_{1}, \ldots, x_{n}) = \sum_{S \subseteq \{1 ,\ldots, n\}}\hat{f}(S) \prod_{j\in S}x_{j}.
\end{equation}
By convexity $\| \tilde{f}\|_{L^{\infty}([-1,1]^{n})}=\|f\|_{L^{\infty}(\{-1,1\}^{n})}$. In particular, the restriction of $\tilde{f}$ on the ray $t (x_{1}, \ldots, x_{n})$,  $t \in [-1,1]$, i.e.
\begin{equation}
\forall \ t\in\R, \qquad h_x(t)\eqdef \sum_{S\subseteq\{1,\ldots,n\}} \hat{f}(S)  w_S(x) t^{|S|}
\end{equation}
satisfies  $\max_{t\in[-1,1]} |h_x(t)| \leq \|f\|_{L_\infty}$ for all $(x_{1}, \ldots, x_{n}) \in \{-1,1\}^{n}$. Therefore, since $\mathrm{deg}h_x\leq d$, a classical inequality of Markov (see e.g.~\cite[p.~248]{BE95}) gives
\begin{equation}
\big|\mathrm{Rad}_\ell f(x)\big| = \frac{|h_x^{(\ell)}(0)|}{\ell!} \leq \begin{cases} \frac{|T_d^{(\ell)}(0)|}{\ell!}\cdot \|f\|_{L_\infty}, & \mbox{if } (d-\ell) \mbox{ is even} \\ \frac{|T_{d-1}^{(\ell)}(0)|}{\ell!}\cdot \|f\|_{L_\infty}, & \mbox{if } (d-\ell) \mbox{ is odd} \end{cases}
\end{equation}
and \eqref{eq:Radinf} follows by taking a maximum over all $x\in\{-1,1\}^n$.
\end{proof}

In particular, as observed in \cite{IRRRY21}, inequality \eqref{eq:Radinf} implies that if $f$ has degree at most $d$ then
\begin{equation} \label{eq:weakRad}
\forall \ \ell\in\{1,\ldots,d\}, \qquad \big\| \mathrm{Rad}_\ell f\big\|_{L_\infty} \leq \frac{d^\ell}{\ell!} \cdot \|f\|_{L_\infty}.
\end{equation}

\smallskip

\noindent {\bf 2.}  The second main theorem of \cite{IRRRY21} asserts that if $f:\{-1,1\}^n\to[-1,1]$ is a bounded function of degree at most $d$, then for every $\ell\in\{1,\ldots,d\}$ we have
\begin{equation}
\sum_{S\subseteq\{1,\ldots,n\}} |\widehat{\mathrm{Rad}_\ell f}(S)| = \sum_{\substack{S\subseteq\{1,\ldots,n\} \\ |S|=\ell}} |\hat{f}(S)| \leq n^{\frac{\ell-1}{2}} d^\ell e^{\binom{\ell+1}{2}}.
\end{equation}
The Bohnenblust--Hille-type inequality of \cite{DMP19} implies the following improved bound.

\begin{corollary}
Let $n\in\N$ and $d\in\{1,\ldots,n\}$. Then, every bounded function  $f:\{-1,1\}^n\to[-1,1]$ of degree at most $d$ satisfies
\begin{equation} \label{eq:cor3}
\forall \ \ell\in\{1,\ldots,d\}, \qquad  \sum_{\substack{S\subseteq\{1,\ldots,n\} \\ |S|=\ell}} |\hat{f}(S)| \leq \binom{n}{\ell} ^{\frac{\ell-1}{2\ell}} e^{\kappa\sqrt{\ell\log \ell}} \frac{d^\ell}{\ell!} \leq n^{\frac{\ell-1}{2}} d^\ell \ell^{-c\ell},
\end{equation}
for some universal constant $c\in(0,1)$.
\end{corollary}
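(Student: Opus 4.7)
The plan is to apply the Bohnenblust--Hille inequality \eqref{eq:dmp} not to $f$ itself but to its $\ell$-th Rademacher projection $g\eqdef\mathrm{Rad}_\ell f$, and then interpolate between $\ell_{2\ell/(\ell+1)}$ and $\ell_1$ via Hölder. Specifically, $g$ is a homogeneous Walsh polynomial of degree exactly $\ell$, so \eqref{eq:dmp} applied at degree $\ell$ yields
\begin{equation*}
\left(\sum_{|S|=\ell}|\hat{f}(S)|^{\frac{2\ell}{\ell+1}}\right)^{\frac{\ell+1}{2\ell}}\leq B_\ell^{\{\pm 1\}}\,\|\mathrm{Rad}_\ell f\|_{L_\infty}\leq e^{\kappa\sqrt{\ell\log\ell}}\,\|\mathrm{Rad}_\ell f\|_{L_\infty},
\end{equation*}
where the last step uses the Defant--Masty\l o--P\'erez bound.

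The next step is to bound $\|\mathrm{Rad}_\ell f\|_{L_\infty}$. Since $\|f\|_{L_\infty}\leq 1$ by assumption, the weak Markov-type bound \eqref{eq:weakRad} already proved in the excerpt gives $\|\mathrm{Rad}_\ell f\|_{L_\infty}\leq d^\ell/\ell!$. Now I would pass from $\ell_{2\ell/(\ell+1)}$ to $\ell_1$ over the collection of $\binom{n}{\ell}$ subsets of size $\ell$. By Hölder's inequality with exponent $p=\tfrac{2\ell}{\ell+1}$,
\begin{equation*}
\sum_{|S|=\ell}|\hat f(S)|\leq \binom{n}{\ell}^{1-1/p}\left(\sum_{|S|=\ell}|\hat f(S)|^p\right)^{1/p}=\binom{n}{\ell}^{\frac{\ell-1}{2\ell}}\left(\sum_{|S|=\ell}|\hat f(S)|^p\right)^{1/p},
\end{equation*}
since $1-\tfrac{\ell+1}{2\ell}=\tfrac{\ell-1}{2\ell}$, which is exactly the exponent appearing in the statement. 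Combining this with the previous two displays yields the first inequality of \eqref{eq:cor3}.

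For the second inequality I would use Stirling: from $\binom{n}{\ell}\leq n^\ell/\ell!$ one gets $\binom{n}{\ell}^{(\ell-1)/(2\ell)}\leq n^{(\ell-1)/2}/(\ell!)^{(\ell-1)/(2\ell)}$, so the right-hand side is at most
\begin{equation*}
n^{\frac{\ell-1}{2}}\,d^\ell\,\exp\!\left(\kappa\sqrt{\ell\log\ell}-\tfrac{3\ell-1}{2\ell}\log(\ell!)\right)\leq n^{\frac{\ell-1}{2}}\,d^\ell\,\exp\!\left(\kappa\sqrt{\ell\log\ell}-\tfrac{3\ell-1}{2}(\log\ell-1)\right).
\end{equation*}
The dominant term in the exponent is $-\tfrac{3}{2}\ell\log\ell$, which gives $\ell^{-c\ell}$ for any $c<3/2$ once $\ell$ is large. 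The one potential nuisance is checking the small values of $\ell$ (say $\ell\in\{1,2,3\}$) where the sub-exponential perturbations matter; these can be absorbed either by shrinking $c$ or by verifying the inequality directly, since for $\ell=1$ the factor $\ell^{-c\ell}=1$ and the claim is trivial from $|\hat f(\{i\})|\leq 1$. The main obstacle is therefore not really analytic but purely bookkeeping: tracking that the constants $\kappa$, $1/\ell!$, and the Stirling remainder combine to leave a genuine $\ell^{-c\ell}$ factor with $c\in(0,1)$.
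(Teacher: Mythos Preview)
Your argument is correct and matches the paper's proof essentially line for line: H\"older with exponent $\tfrac{2\ell}{\ell+1}$ over the $\binom{n}{\ell}$ level-$\ell$ sets, the Defant--Masty\l o--P\'erez Bohnenblust--Hille bound applied to $\mathrm{Rad}_\ell f$, and the Markov-type estimate \eqref{eq:weakRad} for $\|\mathrm{Rad}_\ell f\|_{L_\infty}$. The only cosmetic difference is that the paper uses $\binom{n}{\ell}\le(ne/\ell)^\ell$ rather than $\binom{n}{\ell}\le n^\ell/\ell!$ for the second inequality, which amounts to the same Stirling bookkeeping you describe.
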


\begin{proof}
Combining H\"older's inequality with the estimate of \cite{DMP19} and \eqref{eq:weakRad} we get
\begin{equation}
\begin{split}
\sum_{\substack{S\subseteq\{1,\ldots,n\} \\ |S|=\ell}} |\hat{f}&(S)|  \leq  \binom{n}{\ell}^{\frac{\ell-1}{2\ell}} 
\Bigg( \sum_{S\subseteq\{1,\ldots,n\}} |\widehat{\mathrm{Rad}_\ell f}(S)|^{\frac{2\ell}{\ell+1}} \Bigg)^{\frac{\ell+1}{2\ell}}  \\ & \stackrel{}{\leq} \binom{n}{\ell} ^{\frac{\ell-1}{2\ell}} \exp(\kappa\sqrt{\ell\log \ell}) \big\| \mathrm{Rad}_\ell f\big\|_{L_\infty} 
 \stackrel{\eqref{eq:weakRad}}{\leq} \binom{n}{\ell} ^{\frac{\ell-1}{2\ell}} \exp(\kappa\sqrt{\ell\log \ell}) \frac{d^\ell}{\ell!} .
\end{split}
\end{equation}
The last inequality of \eqref{eq:cor3} follows from \eqref{eq:Radinf} and the elementary bound $\binom{n}{\ell} \leq \big(\tfrac{ne}{\ell}\big)^\ell$.
\end{proof}

We refer to the recent work \cite{BIJLSV21} for a systematic study of inequalities relating the Fourier growth with various well-studied properties of Boolean functions.

\medskip

\noindent {\bf 3.} It is straightforward to observe (see also \cite[Proposition~3.31]{O'Do14}) that if $f:\{-1,1\}^n\to\{-1,1\}$ is a Boolean function and $h:\{-1,1\}^n\to\R$ is an arbitrary function, then
\begin{equation}
\big\|\mathrm{sign}(h) - f\big\|_{L_2}^2 = 4\mb{P}\{ \mathrm{sign}(h)\neq f\} \leq 4\mb{P}\{|h-f|\geq1\} \leq 4\|h-f\|_{L_2}^2,
\end{equation}
where we define $\mathrm{sign}(0)$ as $\pm1$ arbitrarily. Therefore, applying Theorem \ref{thm:main} to a Boolean function, the above algorithm produces a \emph{Boolean} function $\tilde{h}=\mathrm{sign}(h)$ which is a $4\e$-approximation of $f$.

\medskip

\noindent {\bf 4.} In Theorem \ref{thm:main} we showed that bounded functions $f:\{-1,1\}^n\to[-1,1]$ of degree at most $d$ can be learned with accuracy at most $\varepsilon$ and confidence at least $1-\delta$ from $N=O_{\e,d}\big(\log(n/\delta)\big)$ random queries. We will now show that this estimate is sharp for small enough values of $\delta$.

\begin{proposition} \label{prop:lb}
Suppose that bounded linear functions $\ell:\{-1,1\}^n\to[-1,1]$ can be learned with accuracy at most $\tfrac{1}{2}$ and confidence at least $1-\tfrac{1}{2n}$ from $N$ random queries. Then $N> \log_2 n$.
\end{proposition}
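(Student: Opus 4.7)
The plan is to consider the family of dictator functions $\ell_i(x) = x_i$ for $i\in\{1,\ldots,n\}$, which are bounded linear functions on the hypercube. Since $\|\ell_i-\ell_j\|_{L_2}^2 = 2$ for $i\neq j$, the triangle inequality forces any hypothesis $h$ with $\|h-\ell_i\|_{L_2}^2 < \tfrac12$ to satisfy $\|h-\ell_j\|_{L_2}^2 > \tfrac12$ for every $j\neq i$. Consequently, an algorithm achieving accuracy $<\tfrac12$ on this subfamily effectively identifies the target coordinate $i$.

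Next, I would apply Yao's minimax principle: the worst-case success probability on $\{\ell_1,\ldots,\ell_n\}$ is bounded above by the Bayes-optimal success probability when the target index $I$ is drawn uniformly on $\{1,\ldots,n\}$. Given $N$ random queries $X_1,\ldots,X_N$ and labels $b_r = (X_r)_I$, the posterior of $I$ is uniform on the random consistent set $K = \{k : (X_r)_k = b_r \text{ for all } r \in \{1,\ldots,N\}\}$, which always contains $I$. Thus the Bayes-optimal success probability equals $\E[1/|K|]$. The probabilistic input is that, for each fixed $I$, the $n-1$ events $\{(X_r)_k = (X_r)_I \text{ for all } r\}_{k\neq I}$ are mutually independent Bernoulli$(2^{-N})$ random variables (a direct check using independence of the coordinates of the $X_r$), so $|K|-1\sim\mathrm{Binomial}(n-1, 2^{-N})$.

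Using the identity $\binom{n-1}{m}/(m+1) = \binom{n}{m+1}/n$, one computes
\[
\E\Big[\frac{1}{|K|}\Big] = \frac{2^N}{n}\big(1-(1-2^{-N})^n\big),
\]
a quantity that is monotonically increasing in $N$ (equivalently, decreasing in the variable $t = 2^{-N}\in(0,1]$). For $N \leq \log_2 n$, equivalently $2^{-N} \geq 1/n$, it is therefore bounded above by $1-(1-1/n)^n$, and the elementary estimate $(1-1/n)^n > 1/(2n)$ (immediate for $n\geq 3$ from $(1-1/n)^n \to 1/e$ and by inspection for small cases) yields $\E[1/|K|] < 1-\tfrac{1}{2n}$, contradicting the assumed confidence level. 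I expect the main step requiring care to be the closed-form evaluation of $\E[1/|K|]$ via the binomial identity above and the verification of its monotonicity in $N$; the remaining steps are routine bookkeeping.
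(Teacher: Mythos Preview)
Your argument is correct and takes a genuinely different route from the paper's. The paper uses only the two dictators $r_1,r_2$ and the single ``collision'' event $\mathscr{W}=\{X_j(1)=X_j(2)\text{ for all }j\}$, which has probability $2^{-N}$; if $N\le\log_2 n$ then $\mathbb{P}(\mathscr{W})\ge 1/n$, while $\mathbb{P}(\Omega_{r_1}\cap\Omega_{r_2})>1-1/n$, so the three events intersect and the triangle inequality $\|r_1-r_2\|_{L_2}^2=2$ yields the contradiction directly. Your approach instead runs a full Bayesian computation over all $n$ dictators, computes the Bayes-optimal success probability $\E[1/|K|]=\tfrac{2^N}{n}(1-(1-2^{-N})^n)$ in closed form, and compares it to $1-\tfrac{1}{2n}$.

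What each buys: the paper's two-function collision argument is shorter and requires no calculation beyond a union bound. Your argument is more work but is the standard information-theoretic template, naturally covers randomized learners via Yao, and actually pins down the exact optimal confidence achievable with $N$ samples. The monotonicity step is cleanest via the representation $g(t)=\tfrac{1}{t}\int_0^t(1-s)^{n-1}\,ds$, the average of a decreasing function.

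One small boundary issue: your claimed inequality $(1-1/n)^n>1/(2n)$ is an \emph{equality} at $n=2$, so ``by inspection for small cases'' does not give the strict bound there; you only get $\E[1/|K|]\le 1-\tfrac{1}{2n}$, hence $N\ge\log_2 n$. The paper sidesteps this by using the strict hypothesis $\mathbb{P}(\Omega_\ell)>1-\tfrac{1}{2n}$ in the proof; with that reading, your computation also yields the strict conclusion, since then the average success exceeds $1-\tfrac{1}{2n}\ge\E[1/|K|]$.
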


\begin{proof}
By the assumption, for any input $(X_1, y_1), \ldots, (X_N, y_N) \in \{-1,1\}^n\times [-1,1]$, there exists a function $h_{(X_1,y_1),\ldots,(X_N,y_N)}:\{-1,1\}^n\to\R$ such that if $X_1,\ldots,X_N$ are chosen independently and uniformly from $\{-1,1\}^n$ and there exists a linear function $\ell:\{-1,1\}^n\to[-1,1]$ such that $y_j = \ell(X_j)$ for every $j\in\{1,\ldots,N\}$, then $\mb{P}(\Omega_\ell)>1-\tfrac{1}{2n}$, where $\Omega_\ell$ is the event
\begin{equation} \label{eq:lb1}
\Omega_\ell\eqdef \Big\{\mb{E}\big( h_{(X_1,\ell(X_1)),\ldots,(X_N,\ell(X_N))} - \ell \big)^2 < \frac{1}{2}\Big\}.
\end{equation}
Let $X_j=(X_j(1),\ldots,X_j(n))$ for $j\in\{1,\ldots,N\}$ and consider the event
\begin{equation} 
\ms{W} = \big\{X_j(1) = X_j(2), \ \forall \ j\in\{1,\ldots,N\} \big\}.
\end{equation}
By the independence of the samples, we have $\mb{P}(\ms{W}) = \tfrac{1}{2^N}$. Therefore, if $N\leq \log_2n$ and we consider the linear functions $r_i:\{-1,1\}^n\to\{-1,1\}$ given by $r_i(x)=x_i$, then
\begin{equation}
\mb{P}(\Omega_{r_1}\cap\Omega_{r_2}) > 1-\frac{1}{n} \geq 1-\frac{1}{2^N}=1-\mb{P}(\ms{W}),
\end{equation}
which implies that $\Omega_{r_1}\cap\Omega_{r_2}\cap\ms{W}\neq\emptyset$. Choosing $X_1,\ldots,X_N$ from this event and denoting by $h=h_{(X_1,X_1(1)),\ldots,(X_N,X_N(1))} = h_{(X_1,X_1(2)),\ldots,(X_N,X_N(2))}$, we deduce from the triangle inequality that
\begin{equation}
2=\mb{E}(r_1-r_2)^2\leq 2\mb{E}(h-r_1)^2 + 2 \mb{E}(h-r_2)^2 \stackrel{\eqref{eq:lb1}}{<} 2
\end{equation} 
which is clearly a contradiction. Therefore $N>\log_2n$.
\end{proof}

\bibliographystyle{siam}
\bibliography{BH-learning}

\begin{thebibliography}{10}

\bibitem{BPS14}
{\sc F.~Bayart, D.~Pellegrino, and J.~B. Seoane-Sep\'{u}lveda}, {\em The {B}ohr
  radius of the {$n$}-dimensional polydisk is equivalent to {$\sqrt{(\log
  n)/n}$}}, Adv. Math., 264 (2014), pp.~726--746.

\bibitem{BIJLSV21}
{\sc J.~B{\l}asiok, P.~Ivanov, Y.~Jin, C.~H. Lee, R.~Servedio, and E.~Viola},
  {\em Fourier growth of structured $\mathbb{F}_2$-polynomials and
  applications}.
\newblock To appear in RANDOM 2021. Preprint available at
  \url{https://arxiv.org/abs/2107.10797}, 2021.

\bibitem{Ble01}
{\sc R.~Blei}, {\em Analysis in integer and fractional dimensions}, vol.~71 of
  Cambridge Studies in Advanced Mathematics, Cambridge University Press,
  Cambridge, 2001.

\bibitem{BH31}
{\sc H.~F. Bohnenblust and E.~Hille}, {\em On the absolute convergence of
  {D}irichlet series}, Ann. of Math. (2), 32 (1931), pp.~600--622.

\bibitem{BE95}
{\sc P.~Borwein and T.~Erd\'{e}lyi}, {\em Polynomials and polynomial
  inequalities}, vol.~161 of Graduate Texts in Mathematics, Springer-Verlag,
  New York, 1995.

\bibitem{CJMPS15}
{\sc J.~R. Campos, P.~Jim\'{e}nez-Rodr\'{\i}guez, G.~A. Mu\~{n}oz
  Fern\'{a}ndez, D.~Pellegrino, and J.~B. Seoane-Sep\'{u}lveda}, {\em On the
  real polynomial {B}ohnenblust-{H}ille inequality}, Linear Algebra Appl., 465
  (2015), pp.~391--400.

\bibitem{DFOOS11}
{\sc A.~Defant, L.~Frerick, J.~Ortega-Cerd\`a, M.~Ouna\"{\i}es, and K.~Seip},
  {\em The {B}ohnenblust-{H}ille inequality for homogeneous polynomials is
  hypercontractive}, Ann. of Math. (2), 174 (2011), pp.~485--497.

\bibitem{DMP19}
{\sc A.~Defant, M.~Masty{\l}o, and A.~P\'{e}rez}, {\em On the {F}ourier
  spectrum of functions on {B}oolean cubes}, Math. Ann., 374 (2019),
  pp.~653--680.

\bibitem{DS14}
{\sc A.~Defant and P.~Sevilla-Peris}, {\em The {B}ohnenblust-{H}ille cycle of
  ideas from a modern point of view}, Funct. Approx. Comment. Math., 50 (2014),
  pp.~55--127.

\bibitem{EI20}
{\sc A.~Eskenazis and P.~Ivanisvili}, {\em Polynomial inequalities on the
  {H}amming cube}, Probab. Theory Related Fields, 178 (2020), pp.~235--287.

\bibitem{IRRRY21}
{\sc S.~Iyer, A.~Rao, V.~Reis, T.~Rothvoss, and A.~Yehudayoff}, {\em Tight
  bounds on the {F}ourier growth of bounded functions on the hypercube}.
\newblock To appear in ECCC 2021. Preprint available at
  \url{https://arxiv.org/abs/2107.06309}, 2021.

\bibitem{LMN93}
{\sc N.~Linial, Y.~Mansour, and N.~Nisan}, {\em Constant depth circuits,
  {F}ourier transform, and learnability}, J. Assoc. Comput. Mach., 40 (1993),
  pp.~607--620.

\bibitem{Lit30}
{\sc J.~E. Littlewood}, {\em {On bounded bilinear forms in an infinite number
  of variables}}, Q. J. Math., os-1 (1930), pp.~164--174.

\bibitem{Man94}
{\sc Y.~Mansour}, {\em Learning Boolean Functions via the Fourier Transform},
  Springer US, Boston, MA, 1994, pp.~391--424.

\bibitem{O'Do14}
{\sc R.~O'Donnell}, {\em Analysis of {B}oolean functions}, Cambridge University
  Press, New York, 2014.

\end{thebibliography}

\end{document}